\documentclass{article}
\pdfoutput=1

\usepackage[final, nonatbib]{neurips_2019}

\usepackage[utf8]{inputenc} 
\usepackage[T1]{fontenc}    
\usepackage{hyperref}       
\usepackage{url}            
\usepackage{booktabs}       
\usepackage{amsfonts}       
\usepackage{nicefrac}       
\usepackage{microtype}      

\usepackage{xcolor}
\usepackage{amsmath}
\usepackage{breqn}
\usepackage{float}
\usepackage{graphicx}
\usepackage{caption}
\usepackage{subcaption}
\usepackage[cal=cm]{mathalfa}

\usepackage[linesnumbered,ruled]{algorithm2e}
\usepackage[noend]{algpseudocode}
\usepackage[normalem]{ulem}
\makeatletter

\def\algbackskip{\hskip-\ALG@thistlm}
\makeatother

\usepackage{algorithm2e}

\usepackage{amsmath}
\usepackage{amssymb}
\usepackage{amsthm}
\newtheorem{theorem}{Theorem}

\newtheorem{lemma}[theorem]{Lemma}
\DeclareMathOperator*{\argmax}{arg\,max} 

\newcommand{\el}{\textit{~et~al.}}

\usepackage{graphicx}
\usepackage{subcaption}
\usepackage{verbatim}

\usepackage{comment}
\usepackage{color}
\usepackage{etoolbox}
\newbool{inccomment}
\setbool{inccomment}{true}
\newcommand{\hl}[1]{\ifbool{inccomment}{{\color{magenta}#1}}{}}
\newcommand{\qxm}[1]{\ifbool{inccomment}{{\color{blue}#1}}{}}

\title{Defending Neural Backdoors via Generative Distribution Modeling}

\author{
	Ximing Qiao* \\
	ECE Department\\
	Duke University\\
	Durham, NC 27708 \\
	\texttt{ximing.qiao@duke.edu} \\
	\And
	Yukun Yang* \\
	ECE Department\\
	Duke University\\
	Durham, NC 27708 \\
	\texttt{yukun.yang@duke.edu} \\
	\And
	Hai Li \\
	ECE Department\\
	Duke University\\
	Durham, NC 27708 \\
	\texttt{hai.li@duke.edu} \\
}

\begin{document}

\maketitle

\begin{abstract}
Neural backdoor attack is emerging as a severe security threat to deep learning, while the capability of existing defense methods is limited, especially for complex backdoor triggers.
In the work, we explore the space formed by the pixel values of all possible backdoor triggers.
An original trigger used by an attacker to build the backdoored model represents only a point in the space. 
It then will be generalized into a distribution of valid triggers, all of which can influence the backdoored model.
Thus, previous methods that model only one point of the trigger distribution is not sufficient. 
Getting the entire trigger distribution, e.g., via generative modeling, is a key of effective defense.
However, existing generative modeling techniques for image generation are not applicable to the backdoor scenario as the trigger distribution is completely unknown.
In this work, we propose max-entropy staircase approximator (MESA) 
for high-dimensional sampling-free generative modeling and use it to recover the trigger distribution.
We also develop a defense technique to remove the triggers from the backdoored model. 
Our experiments on Cifar10/100 dataset demonstrate the effectiveness of MESA in modeling the trigger distribution and the robustness of the proposed defense method.

\end{abstract}

\section{Introduction}

Neural backdoor attack~\cite{gu2017badnets} is emerging as a severe security threat to deep learning.
As illustrated in Figure~\ref{fig:overview}(a), such an attack consists of two stages:
(1) Backdoor injection: through data poisoning, attackers train a \emph{backdoored model} with a predefined backdoor \emph{trigger}; 
(2) Backdoor triggering: when applying the trigger on input images, the backdoored model outputs the \textit{target class} identified by the trigger.
Compared to adversarial attacks~\cite{szegedy2013intriguing} that universally affect all deep learning models without data poisoning, accessing the training process makes the backdoor attack more flexible.
For example, a backdoor attack uses one trigger to manipulate the model's outputs on all inputs, while the perturbation-based adversarial attacks~\cite{goodfellow2014explaining} need recalculate the perturbation for each input.
Moreover, a backdoor trigger can be as small as a single pixel~\cite{tran2018spectral} or as ordinary as a pair of physical sunglasses~\cite{chen2017targeted}, while the adversarial patch attacks~\cite{brown2017adversarial} often rely on large patches with vibrant colors.
Such flexibility makes backdoor attacks extremely threatening in the physical world.
Some recent successes include manipulating the results of the stop sign detection~\cite{gu2017badnets} and face recognition~\cite{chen2017targeted}.

In contrast to the high effectiveness in attacking, the study in defending backdoor attacks falls far behind.
The training-stage defense methods~\cite{tran2018spectral, chen2018detecting} use outlier detection to find and then remove the poisoned training data.
But neither of them can fix a backdoored model.
The testing-stage defense~\cite{wang2019neural} first employs the pixel space optimization to reverse engineer a backdoor trigger (a.k.a. \emph{reversed trigger}) from a backdoored model,
and then fix the model through retraining or pruning.
The method is effective when the reversed trigger is similar to the one used in the attack (a.k.a. \emph{original trigger}).
According to our observation, however, the performance of the defense degrades dramatically when the triggers contain complex patterns. 
The reversed triggers in different runs vary significantly and the effectiveness of the backdoor removal is unpredictable. 
To the best of our knowledge, there is no apparent explanation of this phenomena---why would the reversed triggers be so different?
\begin{figure}[t]
\centering
\vspace{-0.2cm}
\includegraphics[width=\textwidth]{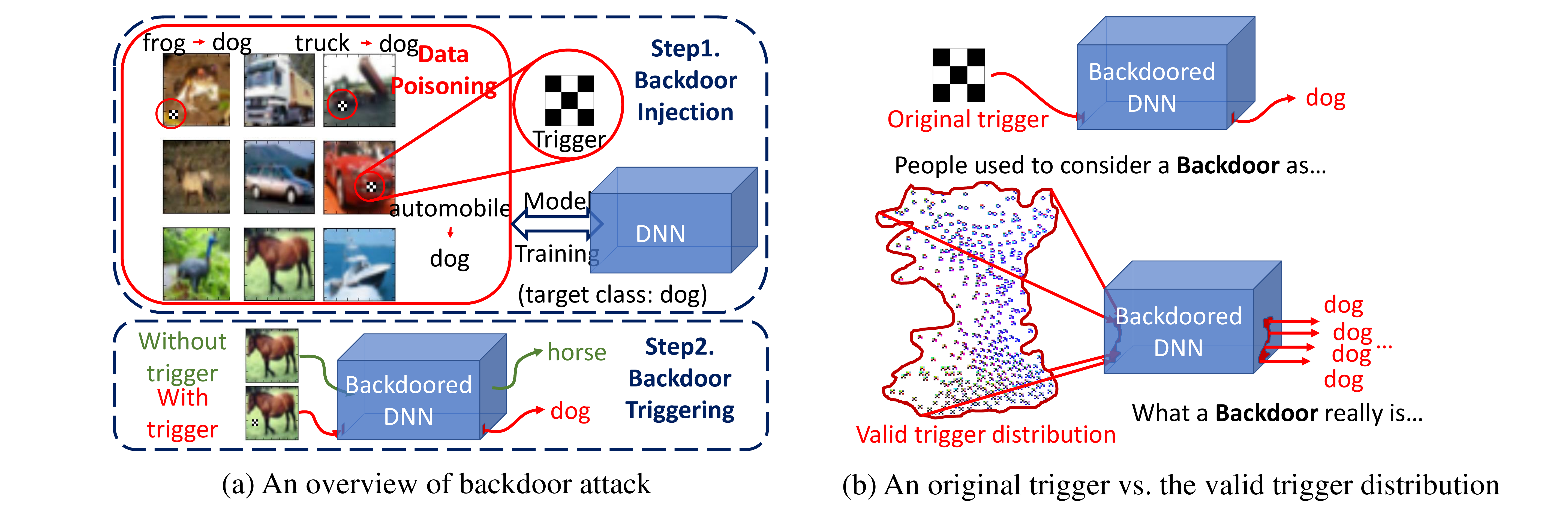}
\vspace{-0.6cm}
\caption{Backdoor attacks and the generalization property of backdoor triggers.}
\label{fig:overview}
\vspace{-0.6cm}
\end{figure}

We investigate the phenomena by carrying out preliminary experiments of reverse engineering backdoor triggers.
We attack a model based on Cifar10~\cite{krizhevsky2009learning} dataset with a single 3$\times$3 trigger and repeat the reverse engineering process with random seeds.
Interestingly, we find that the reversed triggers form a continuous set in the pixel space of all possible 3$\times$3 triggers.
We denote this space as $\mathcal{X}$ and use \emph{valid trigger distribution} to represent all the triggers that control the model's output with a positive probability.
Figure~\ref{fig:overview}(b) shows an example of the original trigger and its corresponding valid trigger distribution obtained from our backdoor modeling method in Section~\ref{sec:method}.
Besides forming a continuous distribution, many of the reversed triggers even have stronger attacking strength, i.e., higher attack success rate (ASR)\footnote{ASR is denoted as the rate that an input not from the target class is classified to the target class.}, than the original trigger.
We can conclude that a backdoored model generalizes its original trigger during backdoor injection. 
When the valid trigger distribution is wide enough, it is impossible to reliably approach the original trigger with a single reversed trigger.

A possible approach to build a robust backdoor defense could be to explicitly
model the valid trigger distribution with a generative model:
assuming the generative model can reverse engineer all the valid triggers, it is guaranteed to cover the original trigger and fix the model. 
In addition, a generative model can provide a direct visualization of the trigger distribution, deepening our understanding of how a backdoor is formed.
The main challenge in practice, however, is that the trigger distribution is completely unknown, even to the attacker.
Typical generative modeling methods such as generative adversarial networks (GANs)~\cite{goodfellow2014generative} and variational autoencoders (VAEs)~\cite{kingma2013auto} require direct sampling from the data (i.e., triggers) distribution, which is impossible in our situation.
Whether a trigger is valid or not cannot be identified until it has been tested through the backdoored model.
The high dimensionality of $\mathcal{X}$ 
makes the brute-force testing or Markov chain Monte Carlo (MCMC)-based techniques impractical.
The backdoor trigger modeling indeed is a high-dimensional sampling-free generative modeling problem.
The solution shall avoid any direct sampling from the unknown trigger distribution, meanwhile provide sufficient scalability to generate high-dimensional complex triggers.

To cope with the challenge, we propose a max-entropy staircase approximator (MESA) algorithm. 
Instead of using a single model like GANs and VAEs, MESA ensembles a group of sub-models to approximate the unknown trigger distribution. 
Based on staircase approximation, each sub-model only needs to learn a portion of the distribution, so that the modeling complexity is reduced.
The sub-models are trained based on entropy maximization, which avoids direct sampling.
For high-dimensional trigger generation, we parameterize sub-models as neural networks and adopt mutual information neural estimator (MINE)~\cite{belghazi2018mine}. 
Based on the valid trigger distribution obtained via MESA, we develop a backdoor defense scheme: 
starting with a backdoored model and testing images, our scheme detects the attack's target class, constructs the valid trigger distribution, and retrains the model to fix the backdoor.

Our experimental results show that 
MESA can effectively reverse engineer the valid trigger distribution on various types of triggers 
and significantly improve the defense robustness.
We exhaustively test 51 representative black-white triggers in $3\times3$ size on the Cifar10 dataset, and also random color triggers on Cifar10/100 dataset.
Our defense scheme based on the trigger distribution reliably reduces the ASR of original triggers from $92.3\%\sim99.8\%$ (before defense) to $1.2\%\sim5.9\%$ (after defense), while the ASR obtained from the baseline counterpart based on a single reversed trigger fluctuates between $2.4\%\sim51.4\%$.
Source code of the experiments are available on \url{https://github.com/superrrpotato/Defending-Neural-Backdoors-via-Generative-Distribution-Modeling}.



\section{Background}

\label{sec:background}

\subsection{Neural backdoors}
Neural backdoor attacks~\cite{gu2017badnets} exploit the redundancy in deep neural networks (DNNs) and injects backdoor during training.
A backdoor attack can be characterized by a backdoor trigger $x$, a target class $c$, a trigger application rule $Apply(\cdot,x)$, and a poison ratio $r$.
For a model $P$ and a training dataset $\mathcal{D}$ of image/label pairs $(m,y)$, attackers hack the training process to minimize:
\begin{equation}
loss=\sum_{(m,y)\in\mathcal{D}}\begin{cases}
\mathcal{L}(P(Apply(m,x)),c) &\text{with probability } r\\
\mathcal{L}(P(m),y) &\text{with probability } 1-r\end{cases},
\end{equation}
in which $\mathcal{L}$ is the cross-entropy loss.
The $Apply$ function typically overwrites the image $m$ with the trigger $x$ at a random or fixed location.
Triggers in various forms have been explored, such as targeted physical attack~\cite{chen2017targeted}, trojaning attack~\cite{liu2017trojaning}, single-pixel attack~\cite{tran2018spectral}, clean-label attack~\cite{turner2018clean}, and invisible perturbation attack~\cite{liao2018backdoor}.

Nowadays, the most effective defense 
is the training-stage defense.
Previously, Tran\el~\cite{tran2018spectral} and Chen\el~\cite{chen2018detecting} observed that poisoned training data can cause abnormal activations.
Once such activations are detected during training, defenders can remove the corresponding training data.
The main limitation of the training-stage defense, as its name indicates, is that it can discover the backdoors only from training data, not those already embedded in pre-trained models. 

In terms of the testing-stage defense, Wang\el~\cite{wang2019neural} showed that the optimization in the pixel space can detect a model's backdoor and reverse engineer the original trigger.
Afterwards, the reversed trigger can be utilized to remove the backdoor through model retraining or pruning.
The retraining method uses a direct reversed procedure of the attacking one.
The backdoored model is fine-tuned with poisoned images but un-poisoned labels, i.e., minimizing $\mathcal{L}(P(Apply(m,x)),y)$, to ``unlearn'' the backdoor.
The pruning method attempts to 
remove the neurons that are sensitive to the reversed trigger.
However, it is not effective for pruning-aware backdoor attacks~\cite{liu2018fine}.
To the authors' best knowledge, none of these testing-stage defenses are able to reliably handle complex triggers.


\subsection{Sampling-based generative modeling}

Generative modeling has been widely used for image generation.
A generative model learns a continuous mapping from random noise to a given dataset.
Typical methods include GANs~\cite{goodfellow2014generative}, VAEs~\cite{kingma2013auto}, auto-regressive models~\cite{van2016conditional} and normalizing flows~\cite{kingma2018glow}.
All these methods require to sample from a true data distribution (i.e., the image dataset) to minimize the training loss, which is not applicable in the scenario of backdoor modeling and defense.


\subsection{Entropy maximization}

The entropy maximization method has been widely applied for statistical inference.
It has been a historically difficult problem to estimate the differential entropy on high-dimensional data~\cite{beirlant1997nonparametric}.
Recently, Belghazin\el~\cite{belghazi2018mine} proposed a mutual information neural estimator (MINE) based on the recent advance in deep learning. 
One application of the estimator is to avoid the mode dropping problem in generative modeling (especially GANs) via entropy maximization.
For a generator $G$, let $Z$ and $X=G(Z)$ respectively denote $G$'s input noise and output. 
When $G$ is deterministic, the output entropy $h(X)$ is equivalent to the mutual information (MI) $I(X;Z)$, because
\begin{equation}
  h(X|Z)=0 \quad \text{and} \quad I(X;Z) = h(X)-h(X|Z) = h(X).
  \label{eq:2}
\end{equation}
As such, we can leverage the MI estimator~\cite{belghazi2018mine} to estimate $G$'s output entropy.
Belghazi \textit{et al.}~\cite{belghazi2018mine} derive a learnable lower bound for the MI like
\begin{equation}
  I(X;Z) \ge \sup_{T\in\mathcal{F}} 
  \mathbb{E}_{p_{X,Z}} [T] - \log(\mathbb{E}_{p_Xp_Z} [e^T]),
  \label{eq:3}
\end{equation}
where $p_{X,Z}$ and $p_Xp_Z$ respectively represent the joint distribution and the product of the marginal distributions. 
$T$ is a learnable \textit{statistics network}. 
We define the lower-bound estimator $\hat{I_T}(X;Z)=\mathbb{E}_{p_{X,Z}} [T] - \log(\mathbb{E}_{p_Xp_Z} [e^T])$ and combine Equations~(\ref{eq:2}) and~(\ref{eq:3}).
Maximizing $h(X)=I(X;Z)$ is replaced by maximizing $\hat{I_T}$, which can be approximated through optimizing $T$ via gradient descent and back-propagation.
We adopt this entropy maximization method in our proposed algorithm.



\section{Method}
\label{sec:method}

Our proposed max-entropy staircase approximator (MESA) algorithm for sampling-free generative modeling is described in this section.
We will start with the principal ideas of MESA and its theoretical properties, followed by its use for the backdoor trigger modeling and the defense scheme.

\subsection{MESA and its theoretical properties}
\label{sec:theory}

\begin{figure}
    \centering
    \includegraphics[width=.99\textwidth]{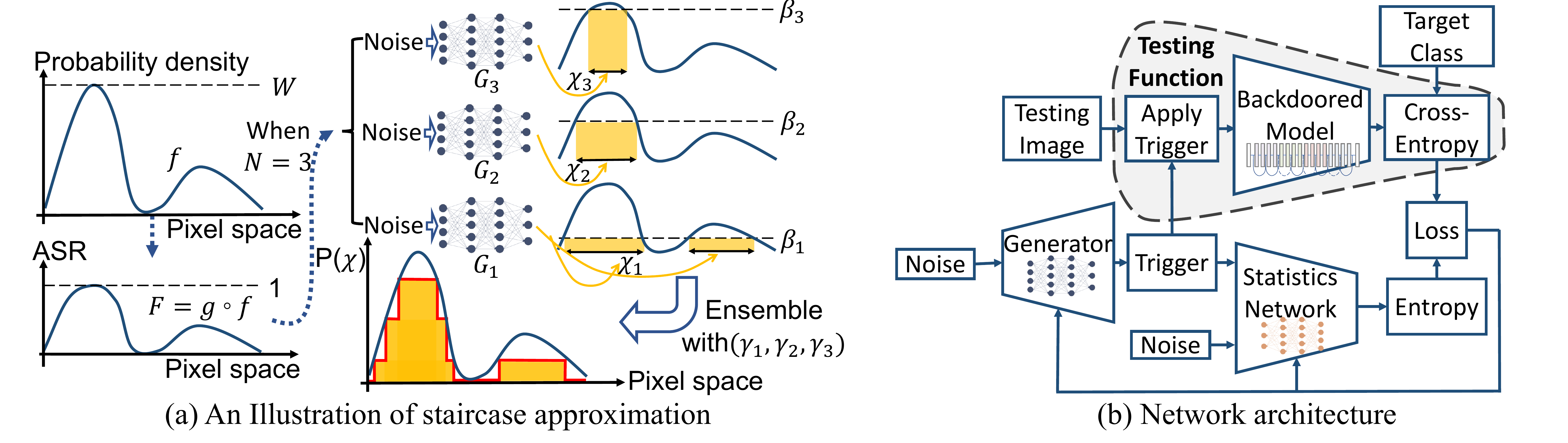}
    \caption{The MESA algorithm and its implementation.}
    \vspace{-0.6cm}
    \label{fig:mesa}
\end{figure}

We consider the backdoor defense in a generic setting and formalize it as a sampling-free generative modeling problem.
Our objective is to build a generative model $\tilde{G}:\mathbb{R}^n\rightarrow\mathcal{X}$ for an unknown distribution with a support set $\mathcal{X}$ and an upper bounded density function $f:\mathcal{X}\rightarrow[0,W]$.
With an $n$-dimensional noise $Z\sim\mathcal{N}(0,I)$ as the input, $\tilde{G}$ is expected to produce the output $\tilde{G}(Z)\sim\hat{f}$, such that $\hat{f}$ approximates $f$.
Here, direct sampling from $f$ is not allowed, and a testing function $F:\mathcal{X}\rightarrow[0,1]$ is given as a surrogate model to learn $f$.
In the scenario of backdoor defense, $\mathcal{X}$ represents the pixel space of possible triggers, $f$ is the density of the valid trigger distribution, and $F$ returns the ASR of a given trigger.
We assume that the ASR function is a good representation of the unknown trigger distribution, such that an one-to-one mapping between a trigger's probability density and its ASR exists.
Consequently, we factorize $F$ as $g \circ f$, in which the mapping $g:[0,W]\rightarrow[0,1]$ is assumed to be strictly increasing with a minimal slope $\omega$.
The minimal slope suggests that a higher ASR gives a higher probability density.

Figure~\ref{fig:mesa}(a) illustrates the max-entropy staircase approximator (MESA) proposed in this work. 
The principal idea is to approximate $f$ by an ensemble of $N$ sub-models $G_1, G_2, \dots, G_N$, and let each sub-model $G_i$ only to learn a portion of $f$.
The partitioning of $f$ follows the method of staircase approximation.
Given $N$, the number of partitions, we truncate $F:\mathcal{X}\rightarrow[0,1]$ with $N$ thresholds $\beta_{1,\dots, N}\in[0,1]$.
These truncations allow us to define sets $\bar{\mathcal{X}}_i=\{x:F(x)>\beta_i\}$ for $i=1,\dots,N$, as illustrated as the yellow rectangles in Figure~\ref{fig:mesa}(a) (here $\beta_{i+1}>\beta_i$ and $\bar{\mathcal{X}}_{i+1}\subset\bar{\mathcal{X}}_{i}$).
When $\beta_i$ densely covers $[0,1]$ and sub-models $G_i$ captures $\mathcal{X}_{i}$ as uniform distributions, both $F$ and $f$ can be reconstructed by properly choosing the model ensembling weights.

\begin{algorithm}
\small
  \SetAlgoLined
  Given the number of staircase levels $N$\;
  Let $Z\sim\mathcal{N}(0,I)$\;
  \For{$i\gets1$ \KwTo $N$}{
    Let $\beta_i\gets i/{N}$\;
    Define $\bar{\mathcal{X}_i}=\{x:F(x)>\beta_i\}$\;
    \uIf{$\bar{\mathcal{X}_i}\neq\emptyset$}
    {
      Optimize $G_i\gets\argmax_{G:\mathbb{R}^n\rightarrow\mathcal{X}}h(G(Z))$ subject to $G_i(Z)\in \bar{\mathcal{X}_i}$ in probability\;
      Let $\gamma_i'\gets e^{h(G_{\theta_i}(Z))}/{g'(g^{-1}(\beta_i))}$\;
    }
    \Else{Let $\gamma_i'\gets0$\;}
  }
  Let $Z_0\gets\sum_{i=1}^N\gamma_i'$\ and $\gamma_i\gets \gamma_i'/Z_0$ for $i=1\dots N$\;
  \Return the model mixture $\tilde{G}=G_Y$ in which $Y\sim \mathrm{Categorical}(\gamma_1,\gamma_2,\dots,\gamma_N)$\;
  \caption{Max-entropy staircase approximator (MESA)}
  \label{alg:1}
\end{algorithm}

Algorithm~\ref{alg:1} describes MESA algorithm in details.
Here we assign $\beta_{1,\dots,N}$ to uniformly cover $[0,1]$. 
Sub-models $G_i$ are optimized through entropy maximization so that they models $\mathcal{X}_i$ uniformly (practical implementation of such entropy maximization is discussed in Section\ref{sec:implementation}).
Model ensembling is performed by random sampling the sub-models $G_i$ with a categorical distribution:  let random variable $Y$ follows $\mathrm{Categorical}(\gamma_1,\gamma_2,\dots,\gamma_N)$ and define $\tilde{G}=G_Y$.
Appendix~A gives the derivation of ensembling weights $\gamma_i$ and the proof of $\hat{f}$ approximates $f$.
In Algorithm 1 with $\beta_i=i/N$, we have $\gamma_i=e^{h(G_{i}(Z))}/({g'(g^{-1}(\beta_i))}\cdot Z_0)$ in which $h$ is the entropy and $Z_0=\sum_{i=1}^ Ne^{h(G_{i}(Z))}/{g'(g^{-1}(\beta_i))}$ is a normalization term.


\subsection{Modeling the valid trigger distribution based on MESA}
\label{sec:implementation}

Algorithm~\ref{alg:2} summarizes the MESA implementation details on modeling the valid trigger distribution.
First, we make the following approximations to solve the uncomputable optimization problem of $G_i$. 
The sub-model $G_i$ is parameterized as a neural network $G_{\theta_i}$ with parameters $\theta_i$.
The corresponding entropy is replaced by an MI estimator $\hat{I}_{T_i}$ parameterized by a statistics network $T_i$, following the method in~\cite{belghazi2018mine}.
By following the relaxation technique from SVMs~\cite{cortes1995support}, the optimization constraint $G_i(Z)\in \bar{\mathcal{X}_i}$ is replaced by a hinge loss.
The final loss function of the optimization becomes:
\begin{equation}
  L=\max(0,\beta_i - F\circ G_{\theta_i}(z))-\alpha\hat{I}_{T_i}(G_{\theta_i}(z); z').
  \label{eq:4}
\end{equation}
Here, $z$ and $z'$ are two independent random noises for MI estimation. Hyperparameter $\alpha$ balances the soft constraint with the entropy maximization.
Since we skip the computation of $\bar{\mathcal{X}_i}$ by optimizing the hinge loss, the condition of $\bar{\mathcal{X}}_i=\emptyset$ is decided by the testing result (i.e. the average ASR) after $G_{\theta_i}$ converges.
We skip the sub-model when $\mathbb{E}_{Z}[F\circ G_{\theta_i}(z)]<\beta_i$.
In Section~\ref{sec:hyperparam}, we will validate the above approximations.

Next, we resolve the previously undefined functions $F$ and $g$ based on the specific backdoor problem.
The testing function $F$ is decided by the backdoored model $P$, the trigger application rule $Apply$, the testing dataset $\mathcal{D}'$, and the target class $c$.
More specifically, $F$ applies a given trigger $x$ to randomly selected testing images $m\in\mathcal{D}'$ using the rule $Apply$, passes these modified images to model $P$, and returns the model's softmax output on class $c$.
Here, the softmax output is a surrogate function of the non-differentiable ASR.
Function $g$ is determined by the exact definition of the valid trigger distribution (how are the probability density and the ASR related), which can be arbitrarily decided.
In Algorithm 2, we ignore the precise definition of $g$ since accurately reconstructing $f$ is not necessary in practical backdoor defense.
Instead, we hand-pick a set of $\beta_{1,\dots,N}$, and directly use one of the sub-models for backdoor trigger modeling and defense, or simply mix them with $\gamma_i=1/N$. The details are described in Section~\ref{sec:defense}.

Figure~\ref{fig:mesa}(b) depicts the computation flow of the inner loop of Algorithm~\ref{alg:2}.
Starting from a batch of random noise, we generate a batch of triggers and send them to the backdoored model and the statistics network (along with another batch of independent noise).
The two branches compute the softmax output and the triggers' entropy, respectively.
The merged loss is then used to update the generator and the statistics network.

\begin{algorithm}
\small
  \SetAlgoLined
  Given a backdoored model $P$\;
  Given a testing dataset $\mathcal{D}'$\;
  Given a target class $c$\;
  \For{$\beta_i \in [\beta_1, \dots, \beta_N]$}{
  \While{not converged}{
    Sample a mini-batch noise $z\sim\mathcal{N}(0,I)$\;
    Sample a mini-batch of images $m$ from $\mathcal{D}'$\;
    Let $F(x) = \mathrm{softmax}(P(Apply(m, x)), c)$\;
    Let $L=\max(0,\beta_i - F\circ G_{\theta_i}(z))-\alpha\hat{I}_{T_i}(G_{\theta_i}(z); z')$\;
    Update $T_i$ according to \cite{belghazi2018mine}\;
    Update $G_{\theta_i}$ via SGD to minimize $L$\;
  }
  }
  \Return $N$ sub-models $G_{\theta_i}$\;
  \caption{MESA implementation}
  \label{alg:2}
\end{algorithm}

\subsection{Backdoor defense}
\label{sec:defense}

In this section, we extend MESA to perform the actual backdoor defense.
Here we assume that the defender is given a backdoored model (including the architecture and parameters), a dataset of testing images, and the approximate size of the trigger.
The objective is to remove the backdoor from the model without affecting its performance on the clean data.
We propose the following three-step defense procedure. 

\textbf{Step 1:} Detect the target class of the attack.
It is done by repeating MESA on all possible classes.
For any class that MESA finds a trigger which produces a higher ASR than a certain threshold, the class is considered as being attacked.
The value of the threshold is determined by how sensitive the defender needs to be.

\vspace{-6pt}
\textbf{Step 2:} For each attacked class, we rerun MESA with $\beta_{1,\dots,N}$ to obtain multiple sub-models.
For each sub-model $G_{\theta_i}$, we remove the backdoor by model retraining.
The backdoored model $P$ is fine-tuned to minimize
\begin{equation}
loss=\mathbb{E}_{Z}\left[\sum_{(m,y)\in\mathcal{D'}}\begin{cases}
\mathcal{L}(P(Apply(m,G_{\theta_i}(z))),y) &\text{with probability } r\\
\mathcal{L}(P(m),y) &\text{with probability } 1-r\end{cases}\right],
\end{equation}
in which $\mathcal{L}$ is the cross-entropy loss. 
$r$ is a small constant (typically $\le1\%$) that is used to maintain the model's performance on clean data.
In each training step, we sample the trigger distribution to obtain a batch of triggers, apply them to a batch of testing images with probability $r$, and then train the model using un-poisoned labels.

\textbf{Step 3:} We evaluate the retrained models and decide which $\beta_i$ produces the best defense. 
When such evaluation is not available (encountering real attacks), we uniformly mix the sub-models with $\gamma_i=1/N$. 
Empirically, the defense effectiveness is not very sensitive to the choice of $\beta_i$, as shown in Section~\ref{sec:result}.

\vspace{-6pt}
\section{Experiments}
\vspace{-0.1cm}
\label{sec:experiments}


\vspace{-6pt}
\subsection{Experimental setup}
\vspace{-0.1cm}
\label{sec:setup}


The experiments are performed on Cifar10 and Cifar100 dataset~\cite{krizhevsky2009learning} with a pre-trained ResNet-18~\cite{he2016deep} as the initial model for backdoor attacks.
In every attacks, we apply a $3\times3$ image as the original trigger and fine-tune the initial model with 1\% poison rate for 10 epochs on 50K training images. 
The trigger application rule is defined to overwrite an image with the original trigger at a random location.
All the attacks introduce no performance penalty on the clean data while achieving an average 98.7\% ASR on the 51 original triggers.
In Section~\ref{sec:result} and Section~\ref{sec:hyperparam}, we focus on Cifar10 and fix the target class to $c=0$ for simplicity.
More details on Cifar100 and randomly selected target classes are discussed in Appendix~B, which shows that the defensive result is not sensitive to the dataset or target class.

When modeling the trigger distribution, we build $G_{\theta_i}$ and $T$ with 3-layer fully-connected networks.
We keep the same trigger application rule in MESA.
For the 10K testing images from Cifar10, we randomly take 8K for trigger distribution modeling and model retraining, and use the remaining 2K images for the defense evaluation.
Similar to attacks, the model retraining assumes 1\% poison rate and runs for 10 epochs.
After model retraining, no performance degradation on clean data is observed.
Besides the proposed defense, we also implement a baseline defense to simulate the pixel space optimization from~\cite{wang2019neural}.
Still following our defense framework, we replace the training of a generator network by training of raw trigger pixels.
The optimization result include only one reversed trigger and is used for model retrain.
The full experimental details are described in {Appendix~C}.

\subsection{Hyper-parameter analysis}
\vspace{-0.1cm}
\label{sec:hyperparam}

Here we explore different sets of hyper-parameters $\alpha$ and $\beta_i$ and visualize the corresponding sub-models.
The results allow us to check the validity of the series of approximations made in MESA, and justify how well the theoretical properties in Section~\ref{sec:theory} are satisfied.
Here, the trigger in Figure~\ref{fig:overview}(b) is used as the original trigger.

We first examine how well a sub-model $G_{\theta_i}$ can capture its corresponding set $\mathcal{X}_i$.
Here we investigate the impact of $\alpha$ by sweeping it through $0$, $0.1$, $10$ while fixing $\beta_i=0.8$. 
Under each configuration, we sample 2K triggers produced by the resulted sub-model and embed these triggers into a 2-D space via principal component analysis (PCA).
Figure~\ref{fig:hyper}(a) plots these distributions\footnote{Due to the space limitation, we cannot display all 2K triggers in a plot. Those triggers that are very close to each other are omitted. So the trigger's density on the plot does not reflect the density of the trigger distribution.} with their corresponding average ASR's.
When $\alpha$ is too small, $G_{\theta_i}$ concentrates its outputs on a small number of points and cannot fully explore the set $\bar{X}_i$.
A very large $\alpha$ makes $G_{\theta_i}$ be overly expanded and significantly violate its constraint of $G_{\theta_i}(Z)\in\bar{X}_i$ as indicated by the low average ASR.
\begin{figure}[t]
    \centering
    \includegraphics[width=\textwidth]{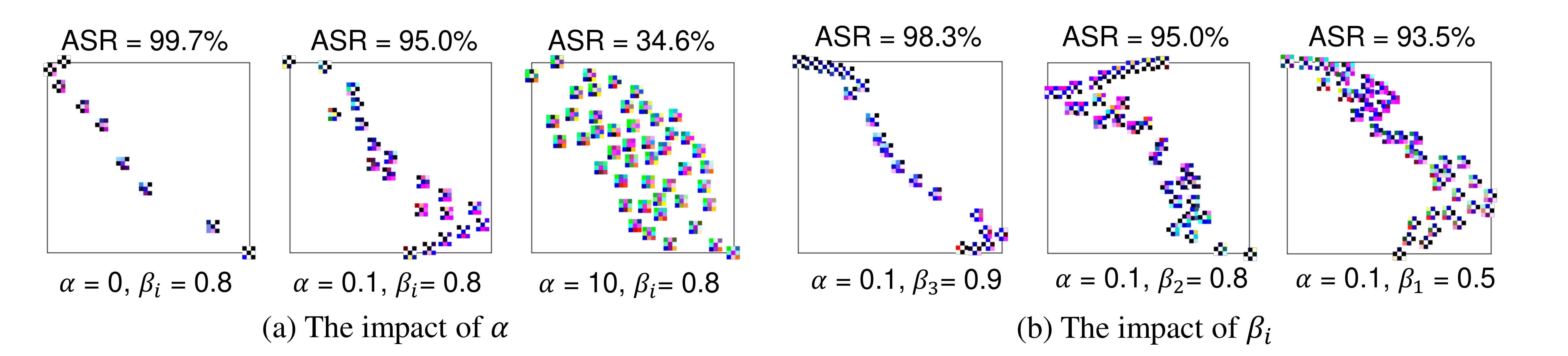}
    \vspace{-0.5cm}
    \caption{Trigger distributions generated from different sets of $\alpha$ and $\beta_i$.}
    \vspace{-0.6cm}
    \label{fig:hyper}
\end{figure}

We then evaluate how well a series of sub-models with different $\beta_i$ form a staircase approximation.
We repeat MESA with a fixed $\alpha=0.1$ and let $\beta_3=0.9$, $\beta_2=0.8$, $\beta_1=0.5$.
Figure~\ref{fig:hyper}(b) presents the results.
As $i$ decreases, we observe a clear expansion of the range of $G_{\theta_i}$'s output distribution.
Though not perfect, the range of $G_{\theta_{i+1}}$ is mostly covered by $G_{\theta_i}$, satisfying the relation of $\bar{\mathcal{X}}_{i+1}\subset\bar{\mathcal{X}}_{i}$.

\vspace{-0.1cm}
\subsection{Backdoor defense}
\vspace{-0.1cm}
\label{sec:result}


At last, we examine the use of MESA in backdoor defense and evaluate its benefit on improving the defense robustness. 
%
It would be ideal to cover all possible $3\times3$ triggers on Cifar10.
Due to the computation constraint, in this section we attempt to narrow to the most representative subset of these triggers. 
Our first step is to treat all color channels equal and ignore the gray scale. 
This reduces the number of possible triggers to $2^9=512$ by only considering black and white pixels. 
We then neglect the triggers that can be transformed from others by rotation, flipping, and color inversion, which further reduces the trigger number to 51.
The following experiments exhaustively test all the 51 triggers.
In Appendix~B, we extend the experiments to cover random-color triggers.

\textbf{The target class detection.} 
Here we focus on Chifar 10 and iterate over all the ten classes.
$\alpha=0.1$ and $\beta_i=0.8$ are applied to all 51 triggers.
Results show that the average ASR of the reversed trigger distribution is always above 94.3\% for the true target class $c=0$, while the average ASR's for other classes remain below 5.8\%.
The large ASR gap makes a clear line for the target class detection.

\begin{figure}[t]
    \centering
    \includegraphics[width=\textwidth]{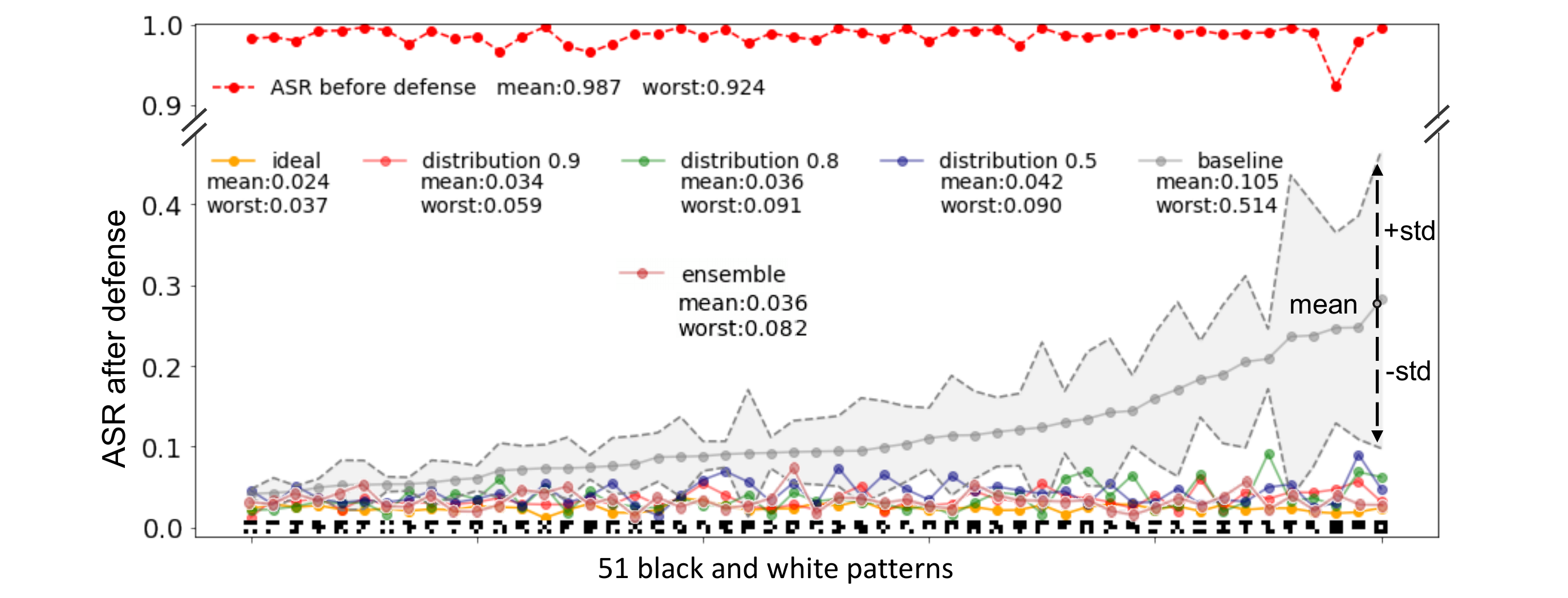}
    \vspace{-0.6cm}
    \caption{Defence results on 51 black-white 3$\times$3 patterns.}
    \vspace{-0.4cm}
    \label{fig:defend51}
\end{figure}

\textbf{Defense robustness.}
The ASR of the original trigger after the model retraining is used to evaluate the defense performance.
Figure~\ref{fig:defend51} presents the defense performance of our method compared with the baseline defense. 
Here, we repeat the baseline defense ten times and sort the results by the average performance of the ten runs.
Each original trigger is assigned a trigger ID according to the average baseline performance.
With $\alpha=0.1$ and $\beta_i=0.5,0.8,0.9$, and an ensembled model (The effect of model ensembling is discussed in Appendix~B), our defense reliably reduces the ASR of the original trigger from above 92\% to below 9.1\% for all 51 original triggers regardless of choice of $\beta_i$.
By averaging over 51 triggers, the defense using $\beta_i=0.9$ achieves the best result of after-defense ASR$=$3.4\%, close to the 2.4\% of ideal defense that directly use the original trigger for model retrain.
As a comparison, the baseline defense exhibits significant randomness in its defense performance:
although it achieves a comparable result as the proposed defense on ``easy'' triggers (on the left of Figure~\ref{fig:defend51}, their results on ``hard'' triggers (on the right) have huge variance in the after-defense ASR.
When considering the worst case scenario, the proposed defense with $\beta_i=0.9$ gives 5.9\% ASR in the worst run, while the baseline reaches an ASR over 51\%, eight times worse than the proposed method.
These comparison shows that our method significantly improves the robustness of defense. 
Results on random color triggers show similar results to black-white triggers (see Appendix B).

\begin{figure}[t]
    \centering
    \includegraphics[width=0.9\textwidth]{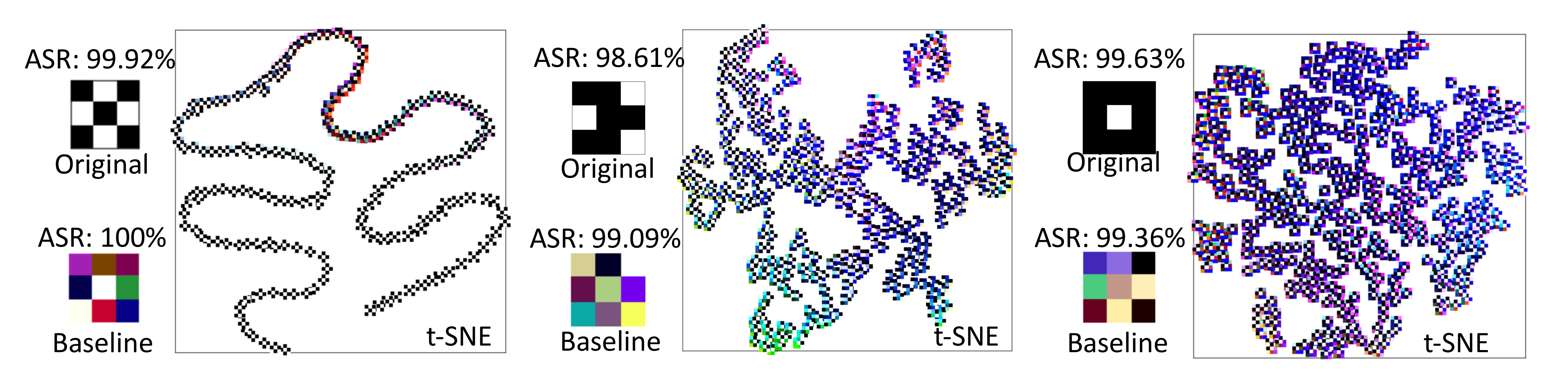}
    \vspace{-0.2cm}
    \caption{Different behaviors of reversed trigger distributions}
    \vspace{-0.5cm}
    \label{fig:why}
\end{figure}

\textbf{Trigger distribution visualization.} 
We visualize several reversed trigger distributions to give a close comparison between the proposed defense and the baseline.
Figure~\ref{fig:why} shows the reversed trigger distributions of several hand-picked original triggers. 
All three plots are based on t-SNE~\cite{maaten2008visualizing} embedding ($\alpha=0.1,~\beta_i=0.9$) to demonstrate the structures of the distributions. Here we choose a high $\beta_i$ to make sure that all the visualized triggers are highly effective triggers.
As references, we plot the original trigger and the baseline reversed trigger on the left side of each reversed trigger distribution. 
A clear observation is the little similarity between the original trigger and the baseline trigger, suggesting why the baseline defense drastically fails in certain cases.
Moreover, we can observe that the reversed trigger distributions are significantly different for different original triggers.
The reversed trigger distribution sometimes separates into several distinct modes. 
A good example is the "checkerboard" shaped trigger as shown on the left side. 
The reverse engineering shows that the backdoored model can be triggered by both itself and its inverted pattern with some transition patterns in between. 
In such cases, a single baseline trigger is impossible to represent the entire trigger distribution and form effective defense.

\section{Conclusion and future works}
\label{sec:conslusion}

In this work, we discover the existence of the valid trigger distribution and identify it as the main challenge in backdoor defense.
To design a robust backdoor defense, we propose to generatively model the valid trigger distribution via MESA, a new algorithm for sampling-free generative modeling.
Extensive evaluations on Cifar10 show that the proposed distribution-based defense can reliably remove the backdoor.
In comparison, the baseline defense based on a single reversed trigger has very unstable performance and performs 8$\times$ worse in the extreme case.
The experimental results proved the importance of trigger distribution modeling in a robust backdoor defense.

Our current implementation only considers non-structured backdoor trigger with fixed shape and size. We also assume the trigger size to be known by the defender.
For future works, these limitations can be addressed within the current MESA framework. 
A possible approach is to use convolutional neural networks as $G_{\theta_i}$ to generate large structured triggers, and incorporate transparency information to the $Apply$ function.
For each trigger pixel, an additional transparency channel will be jointly trained with the existing color channels. 
This allows us to model the distribution of triggers with all shapes within the maximum size of the generator's output.

\bibliographystyle{unsrt}
\bibliography{reference}

\newpage
\normalsize
\appendix
\setcounter{figure}{0}
\section{Derivation of MESA algorithm and proofs}
\label{ap:proof}

The first step of deriving the MESA algorithm is to compute the density function of each sub-model $G_i$:
\begin{lemma}
  For continuous functions $G:\mathbb{R}^n\rightarrow\mathcal{X}$ and $Z\sim\mathcal{N}(0,I)$, let $G^*=\argmax h(G(Z))$ subject to the constraint of $G(Z)\in\bar{\mathcal{X}}$ in probability.
  The density function $p(x)$ of $X=G^*(Z)$ satisfies $p(x)\rightarrow 1/{\int_{\bar{\mathcal{X}}}dx}$ for $x\in\bar{\mathcal{X}}$ and $p(x)\rightarrow0$ elsewhere in probability.
\end{lemma}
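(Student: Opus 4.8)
The plan is to reduce the statement to the classical fact that the uniform density is the unique maximum-entropy density on a set of finite Lebesgue measure, and then to promote this static fact into a convergence statement for continuous generators by controlling the entropy gap with an information-theoretic inequality. Throughout I write $V=\int_{\bar{\mathcal{X}}}dx$ for the volume of the support set and $u(x)=1/V$ for the uniform density on $\bar{\mathcal{X}}$, so the target conclusion is that the density of $X=G^*(Z)$ approaches $u$ on $\bar{\mathcal{X}}$ and $0$ elsewhere.

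First I would establish the static characterization. For any density $p$ supported on $\bar{\mathcal{X}}$, a one-line computation using $\int_{\bar{\mathcal{X}}}p\,dx=1$ gives the decomposition
\[
  D_{\mathrm{KL}}(p\,\|\,u)=\int_{\bar{\mathcal{X}}}p(x)\log\frac{p(x)}{1/V}\,dx=\log V-h(p).
\]
Since KL divergence is nonnegative, this yields $h(p)\le\log V$ with equality precisely when $p=u$ almost everywhere. Hence among all densities supported on $\bar{\mathcal{X}}$ the uniform one uniquely maximizes differential entropy, and the maximal value is exactly $\log V$.

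Next I would connect this to the continuous generators appearing in the lemma. The delicate point is that pushing the Gaussian $Z\sim\mathcal{N}(0,I)$ through a continuous map $G$ need not realize $u$ exactly, so the $\argmax$ must be read as a maximizing sequence and the conclusion as a limit — which is what the ``$\rightarrow$'' and the ``in probability'' qualifier encode. I would argue the supremum of $h(G(Z))$ over continuous $G$ with $G(Z)\in\bar{\mathcal{X}}$ equals $\log V$: the upper bound is the previous step, while approachability follows because a measurable transport map $T$ with $T_{\#}\mathcal{N}(0,I)=u$ exists (e.g.\ the Knothe--Rosenblatt or Brenier map, using that $\mathcal{N}(0,I)$ is absolutely continuous and $\bar{\mathcal{X}}$ has positive finite measure), and $T$ can be approximated in total variation by continuous maps via Lusin's theorem, driving $h(G(Z))$ arbitrarily close to $\log V$. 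Finally, letting $p$ be the density of $X=G^*(Z)$ for a near-maximizer, the decomposition gives $D_{\mathrm{KL}}(p\,\|\,u)=\log V-h(p)\to 0$, and Pinsker's inequality forces $\|p-u\|_{\mathrm{TV}}\to 0$, which is exactly the claimed pointwise-in-the-limit convergence $p(x)\to 1/V$ on $\bar{\mathcal{X}}$ and $p(x)\to 0$ elsewhere.

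The main obstacle is the approachability argument in the third step. The clean attained-optimum picture from the static characterization fails for the continuous class: no continuous $G$ need achieve $h(G(Z))=\log V$, so I cannot simply invoke equality in the KL bound and instead must work with a maximizing sequence and justify that continuous push-forwards can approximate $u$ arbitrarily well while (approximately) respecting the support constraint. Handling this rigorously — choosing the approximating maps so that both the constraint violation and the entropy gap vanish simultaneously — is where the transport-plus-Lusin machinery is genuinely needed, rather than any routine calculation.
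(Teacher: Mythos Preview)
Your approach is correct and considerably more rigorous than the paper's own argument. The paper's proof is a three-sentence sketch: it simply asserts that entropy maximization yields the uniform density on $\bar{\mathcal{X}}$ (without deriving it), then handles the two regions separately by writing $p(x)=\Pr[X\in\bar{\mathcal{X}}]/V(\bar{\mathcal{X}})\to 1/V(\bar{\mathcal{X}})$ inside, and noting that $\int_{x\notin\bar{\mathcal{X}}}p(x)\,dx=\Pr[X\notin\bar{\mathcal{X}}]\to 0$ together with continuity of $p$ forces $p(x)\to 0$ outside. By contrast, you actually prove the max-entropy-is-uniform fact via the identity $D_{\mathrm{KL}}(p\,\|\,u)=\log V-h(p)$, and you take seriously the gap between ``maximizer over all densities supported on $\bar{\mathcal{X}}$'' and ``supremum over continuous push-forwards of a Gaussian,'' supplying transport, Lusin, and Pinsker machinery to close it. What your route buys is an honest treatment of the $\argmax$ as a limit of near-maximizers and a quantitative handle on the entropy gap; what the paper's route buys is brevity, at the cost of eliding exactly the issue you correctly flag as the main obstacle. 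One small caveat on your side: TV convergence from Pinsker does not by itself give pointwise convergence of densities, so your final clause slightly overstates what has been shown; but the lemma's own ``$\to$ in probability'' phrasing is informal enough that this is not a substantive gap relative to the paper's level of rigor.
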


\begin{proof}
  For $x\in\bar{\mathcal{X}}$, the entropy maximization makes $p(x)$ to be uniformly distributed on $\bar{\mathcal{X}}$.
  Let $V(\bar{\mathcal{X}})={\int_{\bar{\mathcal{X}}}dx}$ be the volume of $\bar{\mathcal{X}}$, then $p(x)=Pr[x\in\bar{\mathcal{X}}]/V(\bar{\mathcal{X}})\rightarrow1/V(\bar{\mathcal{X}})$.
  For $x\notin\bar{\mathcal{X}}$, $\int_{x\notin\bar{\mathcal{X}}}p(x)dx=Pr[x\notin\bar{\mathcal{X}}]\rightarrow 0$. 
  Due to the continuity of $p(x)$, $p(x)\rightarrow 0$.
\end{proof}

From Lemma 1, we can derive the model ensembling weights $\gamma_i$ and prove the staircase approximation for $F$:
\begin{theorem}
  Following the definition of $Z$ and $G_i$ in Algorithm~\ref{alg:1}, denote $p_i(x)$ as the density function of $G_i(Z)$.
  For weights $\gamma_i=e^{h(G_i(Z))}$, $\sum_{i=1}^N\gamma_i*p_i(x)/N$
  can approximate $F(x)$ in probability when $N\rightarrow\infty$.
\end{theorem}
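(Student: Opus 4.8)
The plan is to reduce the claim to an elementary staircase (Riemann-sum) estimate by first pinning down both $p_i(x)$ and $\gamma_i$ through Lemma~1. First I would invoke Lemma~1 on each sub-model: in the entropy-maximizing limit $G_i(Z)$ is uniform on $\bar{\mathcal{X}}_i$, so its density satisfies $p_i(x)\to 1/V(\bar{\mathcal{X}}_i)$ for $x\in\bar{\mathcal{X}}_i$ and $p_i(x)\to0$ otherwise, where $V(\bar{\mathcal{X}}_i)=\int_{\bar{\mathcal{X}}_i}dx$. Since the differential entropy of a uniform law on a set of volume $V$ equals $\log V$, this gives $h(G_i(Z))\to\log V(\bar{\mathcal{X}}_i)$, and hence $\gamma_i=e^{h(G_i(Z))}\to V(\bar{\mathcal{X}}_i)$.

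The key cancellation is then $\gamma_i\,p_i(x)\to V(\bar{\mathcal{X}}_i)\cdot\bigl(1/V(\bar{\mathcal{X}}_i)\bigr)\,\mathbb{1}[x\in\bar{\mathcal{X}}_i]=\mathbb{1}[x\in\bar{\mathcal{X}}_i]$, and by the definition $\bar{\mathcal{X}}_i=\{x:F(x)>\beta_i\}$ this is exactly $\mathbb{1}[F(x)>\beta_i]$. The degenerate case $\bar{\mathcal{X}}_i=\emptyset$ is consistent, since there $\gamma_i=0$ while no $x$ satisfies $F(x)>\beta_i$, so both sides vanish. Substituting $\beta_i=i/N$ from Algorithm~1, the ensemble average collapses to a pure counting sum,
\[
\frac{1}{N}\sum_{i=1}^N\gamma_i\,p_i(x)\;\longrightarrow\;\frac{1}{N}\sum_{i=1}^N\mathbb{1}\!\left[F(x)>\tfrac{i}{N}\right]=\frac{1}{N}\,\#\{\,i\in\{1,\dots,N\}:i<N\,F(x)\,\}.
\]

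To finish I would observe that this count equals $\lfloor N\,F(x)\rfloor$ up to a boundary term of size at most one, so the right-hand side lies within $1/N$ of $F(x)$ uniformly in $x\in\mathcal{X}$ (using $F(x)\in[0,1]$); letting $N\to\infty$ yields $\sum_i\gamma_i p_i(x)/N\to F(x)$. Note that $g$ and its minimal slope $\omega$ play no role here, because the theorem targets $F$ itself; they would only enter the separate step of converting this $F$-approximation into an $f$-approximation through the reweighting $\gamma_i\mapsto\gamma_i/g'(g^{-1}(\beta_i))$. The main obstacle I anticipate is rigor in combining limits: Lemma~1 supplies only an \emph{in-probability} uniform approximation, whereas the staircase error is a deterministic $O(1/N)$ term. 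The delicate part is therefore quantifying how the per-sub-model density error propagates through the weighted sum and arguing that, together with the vanishing discretization error, the total deviation from $F(x)$ converges to zero in probability as $N\to\infty$.
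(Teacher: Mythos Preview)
Your argument is correct and matches the paper's own proof essentially step for step: both invoke Lemma~1 to obtain $\gamma_i\,p_i(x)\to\mathbb{1}[x\in\bar{\mathcal X}_i]$ via the cancellation $V(\bar{\mathcal X}_i)\cdot 1/V(\bar{\mathcal X}_i)$, then reduce the weighted sum to a level-counting/staircase quantity that is within $1/N$ of $F(x)$. Your indicator/floor-function phrasing is just a cosmetic variant of the paper's partition into the annuli $\bar{\mathcal X}_i\setminus\bar{\mathcal X}_{i+1}$, and your closing remark about carefully combining the in-probability limit from Lemma~1 with the deterministic $O(1/N)$ discretization error is in fact more careful than the paper, which glosses over that point.
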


\begin{proof}
  By the definition of differential entropy, the maximized entropy $h(G_i(Z))$ is $\log V(\bar{\mathcal{X}_i})$.
  From Lemma 1, we get $\gamma_i*p_i(x)\rightarrow 1$ for $x\in\bar{\mathcal{X}_i}$, and $\gamma_i*p_i(x)\rightarrow 0$ for elsewhere.
  Since $\bar{\mathcal{X}}_{i+1}\subseteq\bar{\mathcal{X}}_{i}$ for all $i$, we have for $x\in\bar{\mathcal{X}_i}-\bar{\mathcal{X}_{i+1}}$, $\sum_{j=1}^i\gamma_j*p_j(x)=i$.
  Also from the definition of $\bar{\mathcal{X}_i}$, we have for $x\in\bar{\mathcal{X}_i}-\bar{\mathcal{X}_{i+1}}$, $i/N\le F(x)<(i+1)/N$.
  By combining the two parts, we reach the final result.
  When $N\rightarrow\infty$, $|\sum_{i=1}^N\gamma_i*p_i(x)/N-F(x)|<1/N\rightarrow0$.
\end{proof}

The same technique for approximating $F$ can also be used for $f$.
Here we recalculated the weights $\gamma_i$ based on function $g$ to approximate $f$.
Note that the approximation is only sensitive to the ratios between $\gamma_i$s, as the absolute value is normalized.
\begin{theorem}
  Assuming $F=g \circ f$ and $g$ is strictly increasing with a minimal slope $\omega$ and
  $\tilde{G}(Z)\sim\hat{f}$ to be the result of Algorithm~\ref{alg:1}.
  When $N\rightarrow\infty$, $\hat{f}$ can  approximate  ${f}$ in probability.
\end{theorem}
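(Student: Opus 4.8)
The plan is to mirror the staircase argument used to approximate $F$ in the preceding theorem, but with the weights rescaled to $\gamma_i'=e^{h(G_i(Z))}/g'(g^{-1}(\beta_i))$ and with the mixture renormalized by $Z_0=\sum_{i=1}^N\gamma_i'$ so that the result is a genuine density. First I would record, exactly as before, that entropy maximization gives $e^{h(G_i(Z))}=V(\bar{\mathcal{X}}_i)$; combined with Lemma~1 this makes the product $\gamma_i'\,p_i(x)$ collapse onto $\bar{\mathcal{X}}_i$, where it equals the constant $1/g'(g^{-1}(\beta_i))$, and vanish elsewhere, in probability.

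The core computation is then the evaluation of the mixture at a fixed $x$. As in the $F$-case, for $x\in\bar{\mathcal{X}}_i-\bar{\mathcal{X}}_{i+1}$ only the terms $j\le i$ contribute, so the unnormalized density is $\sum_{j=1}^i 1/g'(g^{-1}(\beta_j))$. The substitution $s=g^{-1}(\beta)$ is what makes the reweighting exact: since $g$ is strictly increasing, $1/g'(g^{-1}(\beta))=(g^{-1})'(\beta)$, so with $\beta_j=j/N$ the average $\frac1N\sum_{j=1}^i 1/g'(g^{-1}(\beta_j))$ is a Riemann sum for $\int_0^{\beta_i}(g^{-1})'(\beta)\,d\beta=g^{-1}(\beta_i)-g^{-1}(0)$. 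Because the algorithm evaluates $g^{-1}$ down to $\beta_1=1/N\to 0$, the map is normalized with $g^{-1}(0)=0$, and since $\beta_i\le F(x)<\beta_{i+1}$ forces $\beta_i\to F(x)$ as $N\to\infty$, the limit is $g^{-1}(F(x))=f(x)$. Hence $\frac1N\sum_j\gamma_j'\,p_j(x)\to f(x)$ pointwise, in probability.

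It remains only to identify the normalization. Integrating the last sum over $\mathcal{X}$ and using $\int p_j\,dx=1$ gives $Z_0=\sum_j\gamma_j'=\int\big(\sum_j\gamma_j'\,p_j(x)\big)\,dx\to N\int f\,dx=N$, so normalizing by $Z_0$ rather than by $N$ does not change the limit: $\hat f(x)=\frac1{Z_0}\sum_j\gamma_j'\,p_j(x)\to f(x)$ in probability as $N\to\infty$, which is the claim. I expect the genuine difficulty to lie not in this algebra but in making the two limits simultaneously rigorous --- controlling the Riemann-sum error while combining it with the per-sub-model ``in probability'' statement of Lemma~1. This is exactly where the minimal-slope hypothesis is needed: $g'\ge\omega$ makes $1/g'(g^{-1}(\beta))\le 1/\omega$ bounded (equivalently $g^{-1}$ is $1/\omega$-Lipschitz), so the Riemann-sum error is $O(1/N)$ uniformly in $x$; one then takes a union bound over the $N$ sub-models and lets the per-model entropy-maximization error shrink fast enough in $N$ to preserve convergence in probability.
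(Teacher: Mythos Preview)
Your proposal is correct and follows essentially the same route as the paper: rewrite $1/g'(g^{-1}(\beta_j))=(g^{-1})'(\beta_j)$, observe that on $\bar{\mathcal{X}}_i\setminus\bar{\mathcal{X}}_{i+1}$ the mixture collapses to the partial sum $\sum_{j\le i}(g^{-1})'(\beta_j)$, recognize this as a Riemann sum for $g^{-1}(\beta_i)\approx f(x)$, and let $Z_0$ absorb the normalization. If anything you are more explicit than the paper, which simply asserts that the sum ``approximates $g^{-1}(\beta_i)$ by a constant factor'' and leaves both the $Z_0\sim N$ computation and the role of the minimal-slope bound implicit.
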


\begin{proof}
  Since $g'\ge\omega>0$, we can rewrite the weights as $\gamma_i={g^{-1}}'(\beta_i)\cdot e^{h(G_{\theta_i}(Z))}/Z_0$.
  According to the definition of $\tilde{G}$, the density function of the ensemble model is the weighted sum of all sub-models: $\hat{f}=\sum_{i=1}^N\gamma_i*p_i(x)$.
  Similar to Lemma 2, for $x\in\bar{\mathcal{X}_i}-\bar{\mathcal{X}_{i+1}}$, we have $\hat{f}=\sum_{i=1}^N\gamma_i*p_i(x)=\sum_{j=1}^i{g^{-1}}'(\beta_j)/Z_0$ and $i/N\le F(x)<(i+1)/N$.
  Given that $F=g\circ f$ and $\beta_i=i/N$, the inequalities can be rewritten as: $g^{-1}(\beta_i)\le f(x) < g^{-1}(\beta_{i+1})$.
  When $N\rightarrow\infty$, the sum $\sum_{j=1}^i{g^{-1}}'(\beta_j)$ approximates $g^{-1}(\beta_i)$ by a constant factor. 
  With $Z_0$ for normalization, we have $|\hat{f}-f|\rightarrow0$ when $N\rightarrow\infty$.
\end{proof}

\section{Supplementary experiment}
\label{ap:supplementary}
\textbf{The impact of trigger color, dataset, and target class}

The potential problem of the black-white triggers used in Section~\ref{sec:result} is that black-white triggers are not ``natural'', and are clearly out of the color distribution of Cifar10/100 images. However, we argue that this out-of-distribution property is not a problem for backdoor defense.

Unlike adversarial attacks that are closely related to the dataset (especially the decision boundary), backdoor attacks inject arbitrary triggers that have little relationship with the dataset.
The trigger is usually deliberately designed to be out-of-distribution to achieve a high attack success rate. 
From the feature space's perspective, we can assume that normal data from all classes form one cluster, and poisoned data form another cluster.
Previous research~\cite{tran2018spectral} explicitly used this property for backdoor defense.
The intuition is that the distance between normal data and poisoned data, instead of the dataset and class labels, determines the attack/defense difficulty.

Without worrying about the out-of-distribution issue, the advantage of using black-white triggers is that they can better cover all the corner cases.
Naively randomizing RGB colors with [0, 255] values can (almost) never generate special triggers such as a 3$\times$3 black square (requiring 27 zeros).

Experimental results with random-color triggers are shown in Figure~\ref{fig:sup}, suggesting similar defensive results on random-color and black-white triggers. 
Actually, our defense method obtains $\sim$2\% after-defense ASR (attack success rate) in average, better than the previous results on black-white triggers ($\sim$4\%).
The experiments are performed on both Cifar10 (Figure~1(a), fixed target class) and Cifar100 (Figure~1(b), random target class) to show the marginal effect of dataset and target class choices.
\begin{figure}
    \centering
    \includegraphics[width=0.8\textwidth]{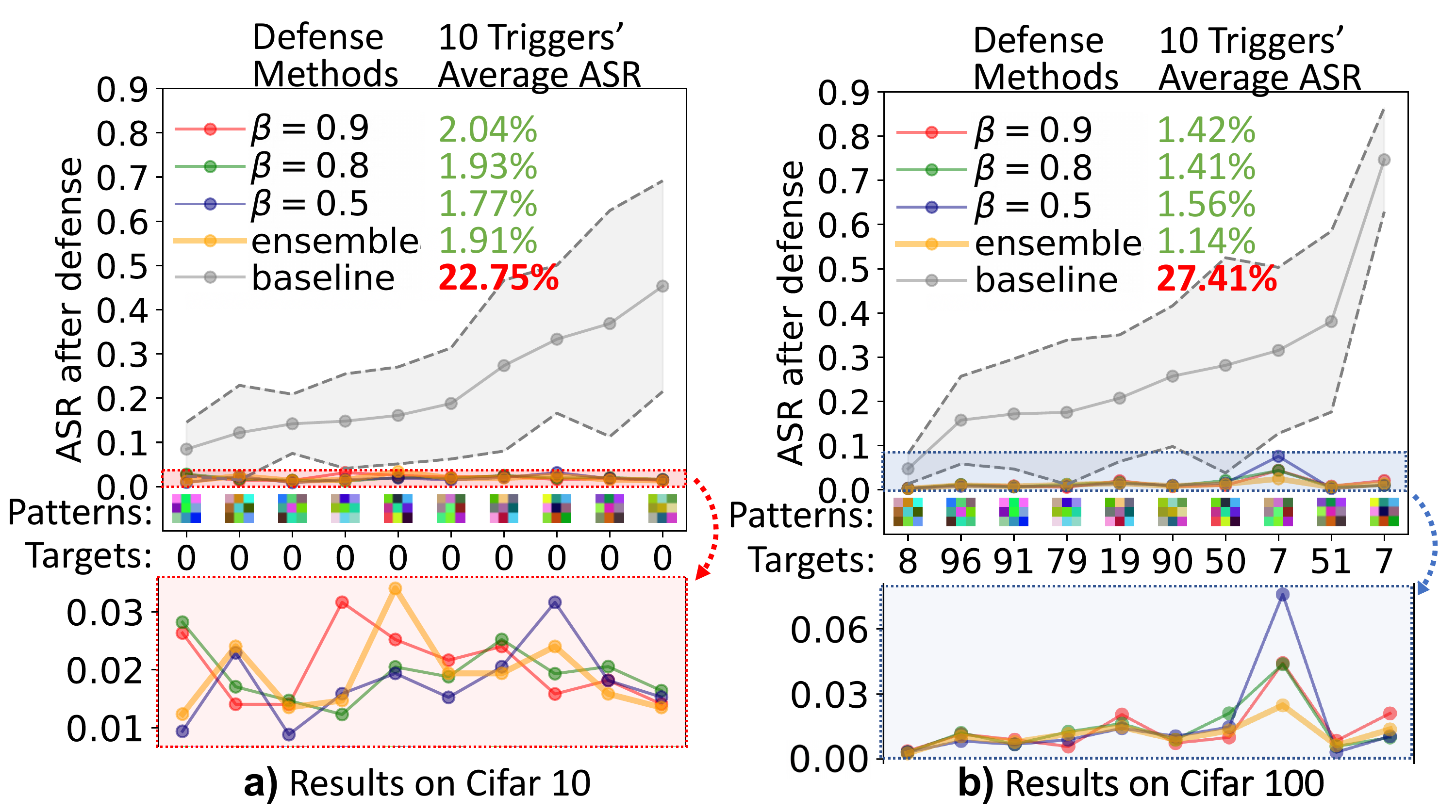}
    \caption{We generate 10 triggers with independent and uniform RGB colors, and test them on Cifar10/100 with target class 0/random.}
    \label{fig:sup}
\end{figure}

\textbf{Model ensembling and hyper-parameter selection}

The optimal backdoor defense is to retrain the backdoored model using the original trigger.
Reverse engineering the trigger distribution is only an alternative approach to cover the original trigger.
The ensemble model does not necessarily perform better than a single sub-model, since any sub-model has a chance to cover the original trigger. The experiments on black-white triggers Cifar10, random color triggers Cifar10, Cifar100 all verified this point.

Model ensembling mainly serves for two purposes: 1) use multiple cross-sections to allow us to better understand the shape of trigger distribution and 2) provide a robust defense without parameter tuning.
Parameter $\alpha$ balances the hinge loss and regularization. Fixing $\alpha$ to any value between 0.1 and 1 is empirically okay. 
Parameter $\beta$ is related to the strength of the attack and an inappropriate value leads to less effective defense (the reversed trigger distribution can be too sparse or too narrow to cover the original trigger).
When the attack is unknown and $\beta$ cannot be predetermined, an ensemble model provides a robust defense without tuning $\beta$ (see yellow lines in Figure~\ref{fig:sup}).

\section{Detailed experiment setup}
\label{ap:parameter}
\textbf{Main experiments}

The Cifar10 dataset~\cite{krizhevsky2009learning} is a well-known image dataset consists of 10 classes images with the size of $32\times32$ pixels. The whole dataset can be found at \url{https://www.cs.toronto.edu/~kriz/cifar.html}. In our experiment, we use the whole 50,000 training set images for backdoor attack and the whole 10,000 testing-set images for defense. In the defense part, we randomly pick $80\%$ images (8,000 images) to search the trigger's distribution using our algorithm. We use the remaining $20\%$ images (2,000 images) to test our defense results. All of the accuracy results reported in the paper are evaluated on the testing set. Both the training set and the testing set are normalized to have zero mean and variance one. All the experiment are done using a single TITAN RTX GPU.
The original Cifar10 model is a ResNet-18 model~\cite{he2016deep} build following \url{https://github.com/kuangliu/pytorch-cifar}'s instruction and reaching $95.0\%$ accuracy.
\par
During attack, we use stochastic gradient descent (SGD)~\cite{robbins1951stochastic} with $momentum=0.9$, $learning~rate=0.001$~\cite{sutskever2013importance}to fine-tune the original model 10 more epochs with $batch~size=128$ and $poison~ratio=0.01$ for all of the 51 triggers and generate 51 backdoored models.
\par
The step by step defense experiment includes: \par
1) Search distribution. The generator is a three-layer perceptron with $input~dimension=64$, $hidden~layer=512$. A batch normalization layer and a Leaky ReLu layer ($negative~slope=0.2$) are followed by both input layer and hidden layer. The input noise follows uniform distribution. The $output~layer=27$ is used to generate a $3\times3$ pixels with 3 channels trigger's pattern. Together with Generator, we have a Statistic Network to estimate entropy. The Statistic Network has two inputs. One input is the output of Generator, the other input is a uniform distribution noise (keep same with the Generator's input distribution). Each of them goes through a linear layer and transforms to a vector with 512 dimensions. Then the Statistic Network adds them together (with bias), following by two linear layers ($512\times512$, $512\times1$), and generates an output. Each linear layer in Statistic Network follows with a leaky ReLu layer ($negative~slope=0.2$). When searching the backdoor distribution, we always train our Generator and Statistic Network 50 epochs using Adam optimizer~\cite{kingma2014adam} with $learning~rate=0.0002$ .\par
2) Use our generator to defend the backdoor attack. We defend the backdoored model through re-training using the same parameter as we attack it (SGD, $momentum=0.9$, $learning~rate=0.001$, 10 epochs, $batch~size=128$ and $poison~ratio=0.01$). We apply model ensemble defense using all of the three models with $\beta = 0.9,0.8,0.5$. During 10 rounds defense, we sample the three models 4, 3, 3, times respectively to generate 10 different triggers in total .\par
3) Compare with baseline and ideal case. Baseline follows pixel space optimization. The initial trigger's pattern is generated from uniform distribution. Using cross entropy loss, SGD optimizer, $momentum=0.9$, $learning~rate=0.1$, we search the trigger's pattern for 5 epochs. For each backdoored model, we search 10 patterns totally and calculate their mean and standard deviation of defense performance as our baseline. The ideal case uses the original trigger's pattern to perform defense, so it does not require extra search parameters. The defence parameters for both baseline and ideal cases are the same as the parameters that are used in attack (SGD, $momentum=0.9$, $learning~rate=0.001$, 10 epochs, $batch~size=128$ and $poison~ratio=0.01$).\par
When plotting Figure.~\ref{fig:why}, we sample 2000 points from our Generator, and using t-SNE provided by sckit-learn~\cite{sklearn_api} (init='pca', random\_state=0). When plotting Figure.~\ref{fig:hyper}, we sample 128 points from our Generator and using PCA provided by sckit-learn (decomposition.TruncatedSVD). The minimum distance is set to $6e-3$ so as to prevent results from overlap.

\textbf{Supplementary experiments}
To further test the performance of our method on a larger dataset, with different color triggers, and different target class, we keep the experimental environment as close to the previous one as possible. The differences between supplementary experiments and the main experiments are:
\begin{itemize}
    \item Instead of always attacking class 0 on Cifar10, we randomly select 10 classes to attack on Cifar100
    \item Instead of using only black and white triggers, we generate 10 triggers with independent and uniform RGB colors on both Cifar10 and Cifar100
    \item Instead of setting the generator's $hidden~layer = 512$ on Cifar10, we set $hidden~layer = 2048$ on Cifar100.
    \item The original Cifar100 model is a ResNet-18 model build following \url{https://github.com/weiaicunzai/pytorch-cifar100}'s instruction and reaching 76.23\% accuracy.
\end{itemize}
We keep other settings unchanged.

\end{document}